\newcommand\ci{\perp\!\!\!\perp}
\newcommand{\tto}{{\theta^{(1)}}}
\newcommand{\ttz}{{\theta^{(0)}}}
\newcommand{\ttt}{{\theta^{(t)}}}
\title{Causal posterior estimation}
\author[$*$]{Simon Dirmeier}
\author[$\dag\ddag$]{Antonietta Mira}
\affil[$*$]{Swiss Data Science Center, ETH Zurich, Switzerland}
\affil[$\dag$]{Università della Svizzera italiana, Switzerland}
\affil[$\ddag$]{Insubria University, Italy}
\date{}
\begin{document}
\maketitle

\begin{abstract}
We present Causal Posterior Estimation (CPE), a novel method for Bayesian inference in simulator models, i.e., models where the evaluation of the likelihood function is intractable or too computationally expensive, but where one can simulate model outputs given parameter values. CPE utilizes a normalizing flow-based (NF) approximation to the posterior distribution which carefully incorporates the conditional dependence structure induced by the graphical representation of the model into the neural network. Thereby it is possible to improve the accuracy of the approximation. We introduce both discrete and continuous NF architectures for CPE and propose a constant-time sampling procedure for the continuous case which reduces the computational complexity of drawing samples to $\mathcal{O}(1)$ as for discrete NFs. We show, through an extensive experimental evaluation, that by incorporating the conditional dependencies induced by the graphical model directly into the neural network, rather than learning them from data, CPE is able to conduct highly accurate posterior inference either outperforming or matching the state of the art in the field.
\end{abstract}
\section{Introduction}
Consider the problem of inferring the Bayesian posterior distribution
\begin{equation}
    \pi(\theta | x) \propto \pi(x | \theta) \pi(\theta)
\end{equation}
when the evaluation of the likelihood function $\pi(x | \theta)$ is intractable, e.g., when its computation entails a high-dimensional integral, and only realizations from the joint distribution $\pi(x, \theta)$ can be generated. In machine learning, \textit{simulation-based inference} (SBI) methods 
circumvent the need to evaluate the likelihood function and instead approximate the posterior distribution using a neural network trained on a synthetic data set of model outputs.

In particular, recent work has demonstrated that techniques from Bayesian inference based on neural density estimation with flow and score matching \citep{chen2018neural,vargas2022bayesian,zhang2022path,lipman2023flow,liu2023flow,geffner23langevin,vargas2023denoising,song2023consistency,dirmeier2023diffusion,berner2024an,vargas2024transport} can be successfully adopted for SBI, enabling comparatively accurate posterior inference while scaling gracefully to high-dimensional parameter spaces \citep{wildberger2023flow,gloeckler2024allinone,sharrock2024sequential,schmitt2024consistency}. For instance, the neural posterior method FMPE \citep{wildberger2023flow} approximates the posterior using a continuous normalizing flow \citep{chen2018neural,lipman2023flow}
\begin{equation}
\begin{split}
\pi(\theta | x) \approx q_1(\theta, x) 
 &= T_{\Lambda_1} [ q_0(\cdot)](\theta)
\end{split}
\end{equation}
where $q_0(\theta)$ is an arbitrary base distribution, such as a standard normal, $T_{\Lambda_1} [ q_0(\cdot)](\theta)$ is a pushforward measure, i.e., a probability measure that is obtained from transferring a measure from one space to another, and where we denote with $T_{\Lambda_1}$ the pushforward operator associated with the time-dependent diffeomorphism $\Lambda_1$ (which in \citet{lipman2023flow} is called \textit{flow}). The flow $\Lambda_{t}(\theta)$ induces a probability density path $q_t$, with $0 \leq t \leq 1$, starting at $q_0(\theta)$ and ending at an approximation to the posterior $\hat{\pi}(\theta | x) = q_1(\theta, x)$. If this flow is parameterized with a neural network that is sufficiently expressive and trained on an adequate amount of model outputs, i.e., realizations from the joint distribution $\pi(x, \theta)$, FMPE converges to the true posterior. In the same framework, neural SBI methods that approximate the posterior using ideas from denoising score matching \citep{geffner2023compositional,sharrock2024sequential,gloeckler2024allinone,schmitt2024consistency} have recently been introduced, impressively demonstrating both the accuracy and scalability of this framework. 

Here, we propose a novel procedure for SBI that efficiently exploits the conditional dependence structure of the Bayesian model. Specifically, we design novel discrete and continuous normalizing flow (NF) architectures that incorporate the causal relationships of parameter and data nodes of the graphical models of the prior and posterior programs, which in \textit{probabilistic programming} \citep{van2018introduction} refers to the graphical representation of the Bayesian model and its inversion, respectively, by hard-coding them into the neural network that parameterizes the flow. We demonstrate that the method, which we term \textit{Causal Posterior Estimation}, (i) is more accurate than the state-of-the-art in multiple experimental benchmark tasks, (ii) has in general less trainable neural network weights if the number of parameters $\theta$ is in the order of around $10$, and (iii) has empirically higher acceptance rates when drawing posterior samples. Additionally, we propose a constant-time $\mathcal{O}(1)$ sampling algorithm based on a rectified flow objective that matches the sampling efficiency of discrete normalizing flows. Interestingly, our method can be formalized within the framework of \textit{structured semiseparable matrices}, making it computationally efficient on modern accelerators \citep{dao2024transformers}.
\section{Preliminaries}
\label{sec:preliminaries}
In conventional Bayesian inference, the availability of a tractable likelihood function enables the use of Markov Chain Monte Carlo or variational inference methods \citep{brooks2011handbook, blei2017variational}. In contrast, simulation-based inference (SBI) methods approximate the posterior distribution using a dataset of simulated outputs from the model. Specifically, neural SBI methods fit neural network-based approximations to the likelihood, posterior or likelihood-to-evidence ratio, using a synthetic data set of model outputs $\{(x_n, \theta_n)\}_{n=1}^N \sim \pi(x, \theta)$, where $N$ is a simulation budget. Hence, while in the SBI framework we are not able to evaluate the likelihood, $\pi(x | \theta)$, we have access to a simulator function that allows drawing samples $x_n$. This is typically the case when $\pi(x | \theta)$ involves a high-dimensional integral over the latent variables of the model. In the following, we briefly introduce relevant methodology for later sections.

\subsection{Gaussian flow matching and denoising score matching}
Continuous normalizing flows (CNFs; \citet{chen2018neural,lipman2023flow,liu2023flow})
are neural network-based density estimators based on pushforward measures. The neural network is used to construct a time-dependent map $\Lambda_t(\theta)$, with $t\in [0, 1]$
defining the ODE
\begin{equation}
\begin{split}
    \frac{\mathrm{d}}{\mathrm{d}t} \Lambda_t(\theta) &= v_{t_\phi}(\Lambda_{t}(\theta))\\
    \Lambda_{0}(\theta) &= \theta
\end{split}
\label{eqn:ode-npe}
\end{equation}
In Equation~\eqref{eqn:ode-npe}, $v_{\phi}: [0, 1] \times \mathbb{R}^{d_\theta} \rightarrow \mathbb{R}^{d_\theta}$ is a vector field that is modeled with a neural network with trainable weights $\phi$, $d_\theta$ is the dimensionality of modeled variable and where we for convenience of notation drop the index $\phi$ in the following. In contrast to the discrete NF case (where $v$ does not depend on time), $v_t$ can be specified by any neural network without architectural constraints (e.g., it does not need to be invertible). Given an arbitrary base distribution $q_0(\theta)$, e.g., a standard normal, and a sample
$\theta^{(0)} \sim q_0(\theta)$ solving the ODE in Equation~\eqref{eqn:ode-npe} yields the pushforward
\begin{equation}
\begin{split}
q_1(\theta) 
 &= T_{\Lambda_1} [ q_0(\cdot)](\theta) \\
&= q_0(\theta) \exp \left(  -\int_0^1 \nabla \ v_{t}(\theta_t) \mathrm{d}t  \right)
\end{split}
\label{eqn:cnf}
\end{equation}
where $\nabla$ is the divergence operator. The neural network parameters can be optimized by solving the maximum likelihood objective
\begin{equation}
\hat{\phi} = \arg \max_\phi \mathbb{E} \left[ \log q_{1}(\theta) \right]
\label{eqn:ml-objective-ml}
\end{equation}
Since optimizing Equation~\eqref{eqn:ml-objective-ml} would require computing many network passes, training it is in practice often not feasible. Instead, \citet{lipman2023flow} propose an alternative training objective which is computationally more favorable and which directly regresses $v_t$ on a reference vector field $u_t$. The key insight of \citet{lipman2023flow} is that, if the vector field $u_t$ is chosen on a conditional basis, i.e., $u_t(\ttt |\tto)$, such that it induces a conditional probability density path $\varrho_t(\theta | \tto)$, and such that
\begin{equation}
\begin{split}
\varrho_0(\theta |\tto) &= \mathcal{N}(\theta; 0, I) \\
\varrho_1(\theta |\tto) &= \mathcal{N}(\theta; \tto, \sigma^2I)  \\
\end{split}
\end{equation}
then the training objective can be framed as a simple mean-squared error loss.
 \citet{lipman2023flow} discuss several possibilities to define the probability paths $\varrho_t$ and vector fields $u_t$ and propose, amongst others, to use the ones that are defined by the optimal transport map
\begin{equation}
\begin{split}
\varrho_t(\theta | \tto) &= \mathcal{N}(t \tto, (1 - (1 - \sigma_\text{min})t)^2 I) \\
u_t(\theta | \tto) &= \frac{\theta_1 - (1 -  \sigma_\text{min})\theta}{1 - (1 - \sigma_\text{min})\theta}
\end{split}
\end{equation}
where $\sigma_\text{min}$ is a hyperparameter (see the original publication for details). The \textit{flow matching} training objective is then defined as
\begin{equation}
\hat{\phi} = \arg \min_{\phi} \mathbb{E}_{t \sim \mathcal{U}(0, 1), \theta^{(1)} \sim \pi(\theta) ,\ttt \sim \varrho(\ttt | \tto) }||v_t(\theta^{(t)}) - u_t(\theta^{(t)}|\theta^{(1)})||^2
\label{eqn:mse-loss-cnf}
\end{equation}
Interestingly, Gaussian flow matching and denoising score matching belong mathematically to the same framework, but are defined via different forward processes (and reference vector fields). Specifically, the variance-preserving SDE in the seminal paper by \citet{song2021scorebased} can be recovered by setting
\begin{equation}
\begin{split}
\varrho_t(\theta | \tto) &= \mathcal{N}\left(\alpha_{1-t} \tto, (1 - \alpha_{1-t}^2) I\right) \\
u_t(\theta | \tto) &= \tfrac{\alpha'_{1-t}}{1 - \alpha^2_{1-t}} \left( \alpha_{1-t} \theta - \tto \right)
\end{split}
\end{equation}
where $\alpha_t$ and $\alpha'_t$ are hyperparameters. In both cases, one can use the vector field $v_t$ to sample deterministically or stochastically from the pushforward \citep{lipman2023flow,tong2024improving,gao2025diffusion}.

\subsection{Neural posterior estimation}
SBI methods based on flow and denoising score matching \citep{wildberger2023flow,sharrock2024sequential,gloeckler2024allinone} that approximate the posterior distribution directly learn a vector field $v_t$ by minimizing the least squares loss
\begin{equation}
\hat{\phi} = \arg \min_{\phi} \mathbb{E}_{t \sim \mathcal{U}(0, 1), \tto \sim \pi(\theta), x \sim \pi(x | \tto),\ttt \sim \varrho_t(\theta | \tto)}||v_t(\ttt, x) - u_t(\ttt|\tto)||^2
\label{eqn:npe-loss}
\end{equation}
Here, $v: [0, 1] \times \mathbb{R}^{d_\theta} \times \mathbb{R}^{d_x} \rightarrow \mathbb{R}^{d_\theta}$ is a function of time, data and a conditioning variable $x \in \mathbb{R}^{d_x}$. In comparison to the unconditional case (Equations~\eqref{eqn:cnf} and \eqref{eqn:mse-loss-cnf}), this yields a conditional CNF $q_1(\theta, x)$ which can be used as an amortized posterior approximation $\hat{\pi}(\theta| x) = q_1(\theta, x)$.

After training, a sample from the posterior $\hat{\pi}(\theta | x_{\mathrm{obs}})$ can be obtained by rejection sampling, i.e., by simulating $\ttz \sim q_0(\theta)$ and solving the ODE in Equation~\eqref{eqn:ode-npe} using off-the-self ODE solvers\footnote{For instance, as implemented in \texttt{SciPy}: \url{https://docs.scipy.org/doc/scipy/reference/generated/scipy.integrate.solve_ivp.html}}, such as Runge-Kutta methods, or with specialized solvers \citep{song2021scorebased,karras2022elucidate}. The simulated sample is accepted if it has non-zero density, i.e., if $\pi(\tto) \ne 0$, when evaluated under the prior.
\section{Causal posterior estimation}
\label{sec:method}
We propose a new method for simulation-based inference that incorporates the conditional dependence (CD) structure of the Bayesian model in a novel normalizing flow architecture. To motivate the method, consider the factorization of the hierarchical model
\begin{equation}
\pi(\theta_1, \theta_2,  \theta_3, x) = \pi(\theta_1) \pi(\theta_2 | \theta_1) \pi(\theta_3) \pi(x | \theta_2, \theta_3)
\label{eqn:hierarchical-model}
\end{equation}
which consists of four variables and a $v$-structure
(see the graphical model in Figure~\ref{fig:hierarchical-model}), and assume that we are interested in inferring the posterior distribution
\begin{equation}
\pi(\theta_1, \theta_2, \theta_3 | x) =  \pi(\theta_1) \pi(\theta_2 | \theta_1) \pi(\theta_3) \pi(x | \theta_2, \theta_3) / \pi(x)
\end{equation}
Given the CD structure induced by the graphical model (Figure~\ref{fig:prior-program}), we argue that a neural network that only models the correlations $\rho(\theta_1, \theta_2 | \theta_{\textbackslash 12})$, $\rho(\theta_2, x | \theta_{\textbackslash 2})$ and $\rho(\theta_3, x| \theta_{\textbackslash 3})$ and all CDs defined by the posterior program, i.e., the inverse of the graphical model (Figure~\ref{fig:posterior-program}), should be equally expressive as a network that (implicitly) estimates all dependencies $\rho(\theta_1, \theta_2, \theta_3, x)$ (for instance, an attention-based transformer \citep{vaswani2017attention}). 
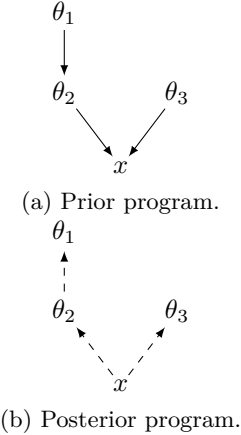
\begin{wrapfigure}{r}{0.5\textwidth}
\begin{center}
\begin{subfigure}[b]{0.24\textwidth}
\centering
\begin{tikzpicture}
\node[const]                               (y) {$x$};
\node[const, above=0.75cm of y, xshift=-.75cm] (t2) {$\theta_2$};
\node[const, above=0.75cm of y, xshift=.75cm]  (t3) {$\theta_3$};
\node[const, above=0.75cm of t2]            (t1) {$\theta_1$};
\edge [style={-latex}, shorten <=2pt, shorten >=2pt]{t2,t3} {y};
\edge[style={-latex}, shorten <=2pt, shorten >=2pt] {t1} {t2};
\end{tikzpicture}
\caption{Prior program.}
\label{fig:prior-program}
\end{subfigure}
~
\begin{subfigure}[b]{0.24\textwidth}
\centering
\begin{tikzpicture}
\node[const]                               (y) {$x$};
\node[const, above=0.75cm of y, xshift=-.75cm] (t2) {$\theta_2$};
\node[const, above=0.75cm of y, xshift=.75cm]  (t3) {$\theta_3$};
\node[const, above=0.75cm of t2]            (t1) {$\theta_1$};
\edge [dashed,style={-latex}, shorten <=2pt, shorten >=2pt] {y} {t2,t3};
\edge[dashed,style={-latex}, shorten <=3pt, shorten >=3pt] {t2} {t1};
\end{tikzpicture}
\caption{Posterior program.}
\label{fig:posterior-program}
\end{subfigure}
\end{center}
\caption{Graphical model of Equation~\eqref{eqn:hierarchical-model}.}
\label{fig:hierarchical-model}
\end{wrapfigure}
We consequently aim to construct a posterior estimator that explicitly incorporates the CD structure of both prior and posterior programs (Figure~\ref{fig:hierarchical-model})\footnote{There are several approaches to invert a graphical model \citep{stuhlmueller2013learning,webb2018faithful}. Here, we adopt the approach by \citet{ambrogioni21automatic} for its simplicity. The method by \citet{webb2018faithful}, whilst preferable, yields denser inverses and we did not observe performance improvements in preliminary experiments.}.

In the following, we present the details of our method, Causal Posterior Estimation (CPE). We first present how we incorporate the prior program, then how the CD statements of the posterior program are accounted for, and finally show how the model is trained and can be sampled from. In this section, we introduce the continuous-time variant and only briefly describe the discrete NF architecture (see Appendix~\ref{app:model-details} for details on neural network architectures, the discrete NF variant, and other theoretical background).

\subsection{Incorporation of the prior program}
\label{sec:prior-base-distribution}
CPE approximates the true posterior $\pi(\theta|x)$ using a continuous normalizing flow (Equation~\eqref{eqn:cnf}) which is parameterized by a neural network $v_t$. With the insight that for exponential family distributions the posterior mean is a convex combination of the prior mean and the likelihood \citep{diaconis1979conjugate}, we design the vector field in Equation~\eqref{eqn:cnf} as
\begin{equation}
    v_t(\theta, x) = \gamma \theta + (1 - \gamma) \lambda_t(\theta, x)
\label{eqn:convex-combination}
\end{equation}
where $\gamma \in [0, 1]$ is a trainable parameter and $\lambda_t$ is a time-dependent map that is parameterized by a neural network. The field $v_t$ is naturally invertible iff $\lambda_t$ is invertible (which is a necessary condition in the discrete NF case, but uncritical in the continuous NF case). To incorporate the prior program in the flow, we simply choose, as a base distribution, the prior, i.e., $q_0(\theta) := \pi(\theta)$. While it might seem unnecessary to account for both posterior and prior programs, recent work has shown that the accuracy in Bayesian inferential problems can be improved if the forward pass of the prior program is incorporated \citep{ambrogioni21automatic}. Constructing the push-forward as a convex combination with initial inputs $\theta \sim \pi(\theta)$ for sampling has, whilst being straight-forward and intuitive, curiously not been described before in the SBI literature.

\subsection{Incorporation of conditional dependencies}\label{sec:cond-dep}
The pushforward is parameterized by a map $\lambda_t$ which we will describe in the following. 

\paragraph{Causal factorization} Assume that the posterior program of the model has the structure of a directed acyclic graph\footnote{This assumption is not restrictive, since even generative models with loops, such as dynamic Bayesian networks, can be unrolled such that the generative model is acyclic. Also, cyclic generative models are only very rarely (if ever) used in probabilistic inference.}. We sort the parameter nodes of the posterior program in topological order. For instance, in the graphical model in Figure~\ref{fig:posterior-program}, a possible topological ordering is given by $\omega = \{ 3, 2, 1\}$. We then sort the variables based on their topological ordering and construct a mapping $\lambda_t$ that models
\begin{equation}
    p(x, \theta) = p(x) \prod_i^{d_\theta} p(\theta_{\omega_i} | \theta_{<\omega_i}, x)
    \label{eqn:factorization}
\end{equation}
Specifically, we model each factor of Equation~\eqref{eqn:factorization} using a neural network $f_i$,  i.e., $\theta'_{\omega_i} = f_i(\theta_{\le \omega_i})$, in such a way that the conditional dependencies of the posterior model, and only those, are retained when transforming $\theta_{\le \omega_i}$ to $\theta'_{\omega_i}$. One way to do this, is to design a lower-triangular matrix $W \in \mathbb{R}^{d_\theta \times d_\theta}$ and, ordering the nodes of the DAG topologically, setting all $W_{i<j}=0$, i.e., 
\begin{equation}
W = \begin{bmatrix}
    g(W_{1,1}) & 0  & \cdots & 0\\
    W_{2,1} & g(W_{2,2}) & \cdots& 0 \\
    \vdots & \vdots & \ddots & \vdots \\    
    W_{d_\theta,1} & W_{d_\theta,2} & \cdots & g(W_{d_\theta,d_\theta}) \\
    \end{bmatrix}
\end{equation}
where $g: \mathbb{R} \rightarrow \mathbb{R}$ is a random element-wise activation function that can also be the identity function (see Section~\ref{sec:discrete-nf}). To avoid needlessly modeling additional conditional dependencies, we set all matrix elements $W_{ij}=0$ where
\begin{equation}
    \theta_i \ci \theta_j \mid \theta_{\textbackslash ij}, x
\end{equation}
in the posterior program. For instance, for the posterior program in Figure~\ref{fig:posterior-program} we define the linear transform for the topological ordering $\omega = \{ 3, 2, 1\}$
\begin{equation}
W = \begin{bmatrix}
    g(W_{1,1}) & 0  & 0 \\
    0 & g(W_{2,2}) & 0 \\
    0 & W_{2,3} & g(W_{3,3})     
    \end{bmatrix}
\end{equation}
\paragraph{Block matrix projection} Furthermore, instead of using $W$ as a projection, we additionally augment the transformation with auxiliary dimensions forming a block-matrix $B$ where for each variable $\theta_i$ we use a block of size $d_{\mathrm{in}} \times d_{\mathrm{out}}$ with $d_{\mathrm{in}},d_{\mathrm{out}} \ge 1$. Augmenting projections with auxiliary variables has previously been reported to increase expressivity of normalizing flow models \citep{dupont2019augmented,cornish20relax,ambrogioni21automatic}.
The attentive reader might have noticed that parameterizing all $f_i$ jointly using a transform $B$ is a form of a \textit{block neural autoregressive flow} \citep{huang2018neural,cao2020block}. To construct a map $f: \mathbb{R}^{d_\theta} \rightarrow \mathbb{R}^{d_\theta}$, we stack several block matrices and construct a sequence of transformations $f = (f^{(1)}, \dots, f^{(K)})$ where each $f^{(k)}$ is a linear projection
\begin{equation}
    f^{(k)}(\theta) = \text{act}\left(B^{(k)}(\theta) + b^{(k)}\right)
\label{eqn:linear-projection}
\end{equation}
$b^{(k)} \in \mathbb{R}^{d^{(k)}_{\mathrm{out}} d_\theta}$ is an offset term, $\text{act}$ is an arbitrary activation function, and the block sizes are set to $d_{\mathrm{in}}^{(1)}= d_{\mathrm{out}}^{(K)}=1$ and the rest are user-defined integers.

If $B$ is sufficiently sparse, it belongs to the family of \textit{structured semiseparable matrices} which can be represented with sub-quadratic parameters and which have fast matrix multiplication operations \citep{katharopoulos20tranformers,fu2023monarch,dao2024transformers}. As a consequence, the block matrices can be computed efficiently on modern computing hardware without masking procedures. While we did not implement the algorithms introduced in, e.g., \citet{dao2024transformers}, here, modeling $B$ as semiseparable matrix opens up efficient implementation possibilities for our model. For posterior programs with non-sparse structure, the blocks in $B$ can be modeled to enforce a low-rank structure via defining each block
\begin{equation}
    B_{ij} = \xi_{ij}\zeta_{ij}^T
\end{equation}
where $\xi_{ij},\zeta_{ij}$ are trainable vectors.

\paragraph{Conditioning on time and data} To condition the flow on continuous time variables $t$ and data $x$, we first compute random Fourier features for $t$ \citep{tancik2020fourier,song2021scorebased}. We then project the Fourier features and data through two independent MLPs before concatenating the projections and using them as conditioning vectors. In this work, we add the conditioning vectors to the leaf nodes of the posterior program after the first flow layer $f^{(1)}$ (as is typically done in autoregressive flows \citep{germain2015made,kingma2016improved,papamakarios2017masked}), even though conditioning after each layer is naturally possible (see Appendix~\ref{app:model-details} for details).

\subsection{Training and constant time sampling}
Sampling from $q_1(\theta, x)$ can be performed using a conventional ODE solver requiring $\mathcal{O}(T)$ steps. By training the flow to find a straight transport map from the base distribution $q_0$ (i.e, the prior distribution; see Section~\ref{sec:prior-base-distribution}) to the posterior $q_1(\theta, x)$, sampling can be performed with constant time complexity $\mathcal{O}(1)$.  
Specifically, we train CPE using the \textit{rectified flow} objective \citep{liu2023flow}
\begin{equation}
    \hat{\phi} = \arg \min_{\phi} \mathbb{E}_{t, \tto ,x, \ttz}
    \left[ 
    \lVert (\tto - \ttz) - v_t(\ttt, x) \rVert^2 
    \right], \quad \mathrm{with} \quad \ttt = t \tto + (1 - t) \ttz
    \label{eqn:rectified-loss}
\end{equation}
We can then draw a posterior sample $\theta^{(1)} \sim q_1$ by solving Equation~\eqref{eqn:cnf}. Starting from a prior draw $\theta^{(0)} \sim \pi(\theta)$ and a fixed discretization steps $\mathrm{d}t = \tfrac{1}{20}$, we solve the ODE using an Euler solver where we use a small constant value of $T=20$ steps to account for approximation errors in the learned transport map.

\subsection{Discrete vs continuous pushforwards}
\label{sec:discrete-nf}
The time-dependent map $v_t$ is not required to have any architectural constraints, since sampling using an ODE is trivially invertible. However, we can define a discrete normalizing flow architecture by dropping the argument $t$ and defining $v: \mathbb{R}^{d_\theta} \times \mathbb{R}^{d_x} \rightarrow \mathbb{R}^{d_\theta}$. The forward transform defined in Equation~\eqref{eqn:linear-projection} is invertible given $x$ if $g$ is invertible and strictly positive, and the activation functions are invertible (see Appendix~\ref{app:model-details} for details). While the inverse $v^{-1}(\cdot, x)$ , which is required for sampling, does not have an analytical solution, it can be easily found by, e.g., bifurcation or by training a regressor neural network on pairs of posterior and prior samples $(\tto, \ttz)$.

\subsection{Amortized vs sequential inference}
Sequential SBI procedures have been shown to be able to improve the accuracy of the posterior approximation $\pi(\theta | x_\mathrm{obs})$ when the goal is to estimate the posterior for a single observation, $x_\mathrm{obs}$, rather than obtaining an amortized solution \citep{gutmann2016bayesian,papamakarios2016fast,lueckmann2017flexible,papamakarios2019sequential,greenberg2019automatic,durkan2020contrastive,hermans2020likelihood,miller2022contrastive,deistler2022truncated,sharrock2024sequential}. CPE can be readily adopted to the truncation approach by \citet{sharrock2024sequential} to allow for sequential inference (even though we leave this for future work here).
\section{Related work}
\label{sec:related}
\paragraph{Flow and denoising score matching} Flow and denoising score matching approaches have recently been applied with great success in applications ranging from generative modeling \citep{ho2020diffusion,song2019generative,song2021scorebased,de2021diffusion,karras2022elucidate,liu2023flow,lipman2023flow,albergo2023building,tong2024simulation}, to Bayesian (variational) inference \citep{vargas2022bayesian,zhang2022path,lipman2023flow,liu2023flow,geffner23langevin,vargas2023denoising,dirmeier2023diffusion,berner2024an,vargas2024transport}, to inverse problems \citep{adkhodaie2021stochastic,kawar2021snips,chung2022improve,kawar2022denoising,rout2023solving,chung2023diffusion,dou2024diffusion}.

\paragraph{Simulation-based inference}
In neural SBI, multiple methods based on flow and denoising score matching have been proposed. While several methods \citep{papamakarios2016fast,lueckmann2017flexible,greenberg2019automatic,deistler2022truncated,radev2023jana} suggested applying mixture density networks or discrete normalizing flows for inferring posterior distributions, \citet{geffner2023compositional,wildberger2023flow,sharrock2024sequential,gloeckler2024allinone,schmitt2024consistency} proposed continuous-time alternatives 
where training objectives either utilize flow matching or score matching losses
\citep{tong2024improving,gao2025diffusion}. Most related to our work, \citet{gloeckler2024allinone} employ a transformer-based neural network which allows to encode the correlation structure of the posterior. In comparison to our work, their work is based on simple masking mechanisms that define the correlations of the posterior program and which need to be refined which each transformer layer. Our method does not have this requirement and directly incorporates the conditional dependencies in the neural network. Also, their approach only considers the posterior program while our work models both prior and posterior conditional dependencies. Furthermore, we introduce an architecture which can be inverted in case a discrete normalizing flow should be utilized while their neural network is not invertible.

Instead of approximating the posterior directly, other approaches aim to find neural approximations to the likelihood \citep{papamakarios2019sequential,glockler2022variational,pacchiardi2022score,dirmeier2023simulation} or the likelihood-to-evidence ratio \citep{hermans2020likelihood,miller2022contrastive,delaunoy2022towards}. Furthermore, for high-dimensional problems, it can often be advantageous to compute a collection of near-sufficient summary statistics \citep{chen2021neural,albert2022learning,chen2023learning} and then apply either neural SBI or ABC methods \citep{beaumont2009adaptive,del2012adaptive,albert2015simulated,sisson2018handbook}. 

An important but unrelated line of research focuses on \textit{robust} SBI methods \citep{hermans2022a,frazier2020model,delaunoy2022towards,hermans2022a,ward2022robust,dellaporta22robust,huang2023learning,gloeckler23adversarial,kelly2024misspecificationrobust,verma2025robust,yuyan2025robust}, i.e., methods that can account for model misspecification and poor calibration. The contributions of this work are orthogonal to this research and we expect that CPE, similarly to other NPE methods, only poorly handles misspecification necessitating future work in this direction.
\section{Experiments}
\label{sec:experiments}
We evaluate CPE in an experimental setting using nine SBI benchmark tasks. We compare CPE against the state-of-the-art baselines Flow Matching Posterior Estimation (FMPE; \citet{wildberger2023flow}), Posterior Score Estimation (PSE; \citet{sharrock2024sequential}) and All-in-One Posterior Estimation (AIO; \citet{gloeckler2024allinone}) to demonstrate the competitiveness of our method.

For each benchmark task we simulate five different synthetic data sets, each consisting of a random observable and parameter sampled from the generative model $(x_{\mathrm{obs}}, \theta) \sim \pi(x, \theta)$, and then attempt to infer an accurate approximation to the posterior $\hat{\pi}(\theta | x_{\mathrm{obs}})$
(Appendix~\ref{app:experimental-models} for details on experimental models). We compare the approximation to a "ground-truth" posterior which we infer using a Markov Chain Monte Carlo sampler, meaning that here all benchmark tasks have a tractable likelihood function while this is not the case in real-world applications (Appendix \ref{app:experimental-details} for details). We compare the inferred posteriors to the ground-truth posteriors using the H-min divergence \citep{zhao2022comparing,dirmeier2023simulation} and classifier-two-sample-tests (C2ST; \citet{lopezpaz2017revisiting}) using different data set sizes. All methods use a Runge-Kutta 5(4) solver and we additionally use an Euler solver with $T=20$ steps for CPE.
\begin{figure}[h!]
    \centering
    \includegraphics[width=\textwidth]{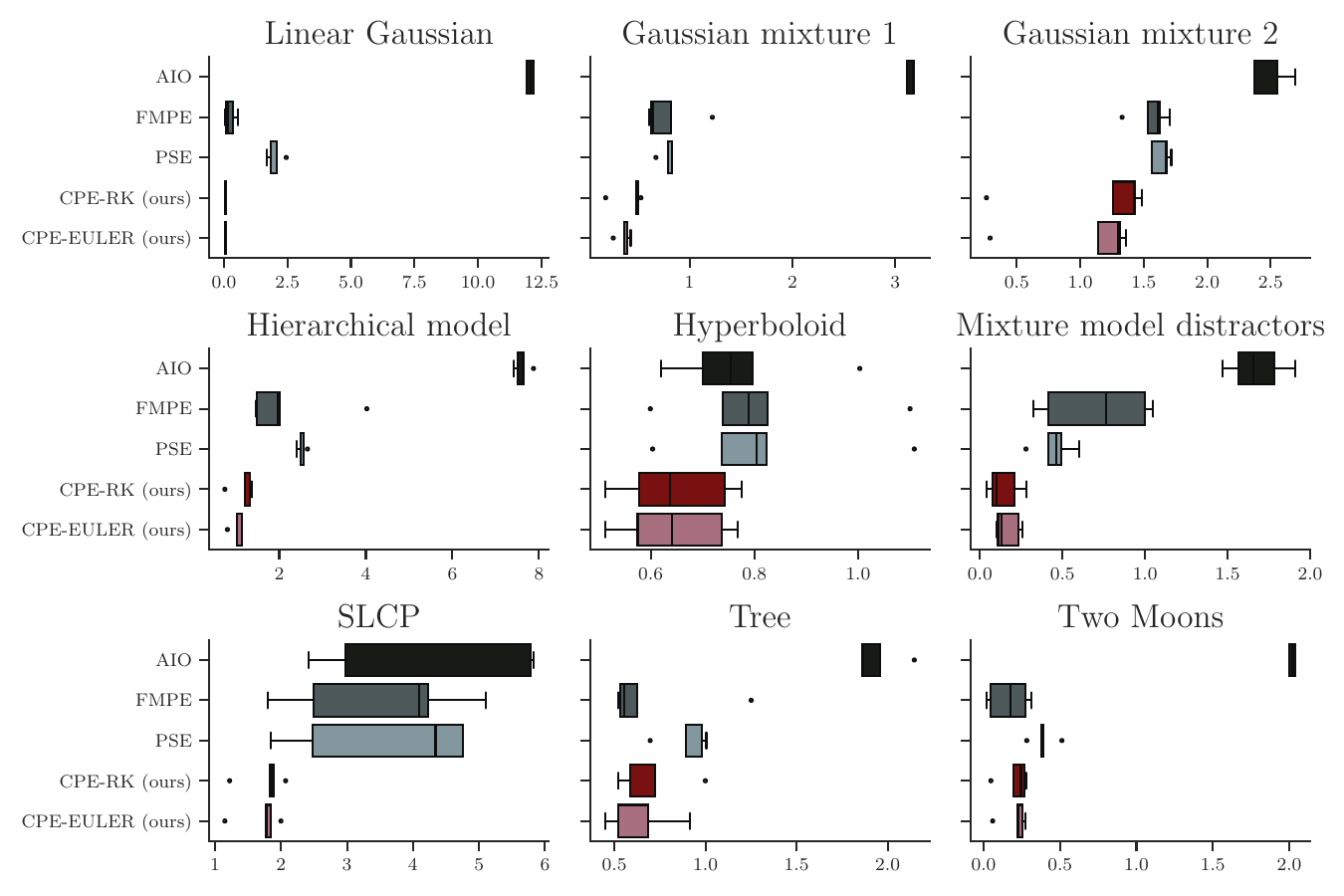}
    \caption{CPE and baseline performance using a H-min metric (smaller values are better). CPE-RK denotes the CPE variant that uses a Runge-Kutta 5(4) solver while CPE-Euler uses a $20$-step Euler solver.}
    \label{fig:benchmark_tasks-hmin}
\end{figure}
CPE convincingly outperforms or matches the performance of the state-of-the-art in all nine experimental models (Figure~\ref{fig:benchmark_tasks-hmin}; Appendix~\ref{app:additional-results} for additional experimental results). Interestingly, the CPE variant that uses an Euler solver using $20$ steps (CPE-Euler) is on-par with the variant that does not approximate the ODE trajectory (CPE-RK) and in some cases even outperforms it. This highlights the fact that the rectified training loss (Equation~\eqref{eqn:rectified-loss}) indeed yields straightened trajectories between base distribution and posterior, and that the posterior can be sampled efficiently without performance degradation. Moreover, both CPE variants achieve better acceptance rates during posterior sampling than the baselines AIO, FMPE and PSE over all experimental evaluations which can in practice reduce the overall sampling time (Figure~\ref{fig:benchmark_tasks-acceptance_rate}). We note that CPE achieves this performance while generally having fewer trainable neural network parameters than FMPE and PSE. For instance, compare the $47\ 000$ parameters of CPE in the mixture model with distractors against the $72\ 000$, $191\ 000$ and $191\ 000$ parameters for AIO, FMPE and NPSE, respectively. Similarly, compare the $71\ 000$ parameters of CPE in the hyperboloid, two moons or Gaussian mixture models, against the $71\ 000$, $190\ 000$ and $190\ 000$ for the other baselines.
\begin{figure}[h!]
\centering
\includegraphics[width=.5\textwidth]{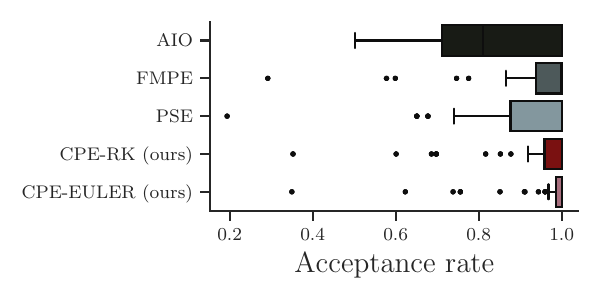}
\caption{Sampling acceptance rates of all methods. CPE has consistently high acceptance rates when drawing a posterior sample which reduces down the total number of samples required to be drawn. Results from all experimental evaluations (Figure~\ref{fig:benchmark_tasks-hmin}) are pooled.}
\label{fig:benchmark_tasks-acceptance_rate}
\end{figure}

\section{Limitations}
Our method naturally has several limitations. In particular, the block matrix structure is susceptible to the curse of dimensionality in high-dimensional parameter spaces. While this is negligible when the model involves only a small number of parameters (e.g., on the order of $10$), it can become inefficient when hundreds of parameters need to be inferred. Transformer-based methods circumvent this issue by embedding the data and parameters in a fixed-dimensional vector space \citep{gloeckler2024allinone,patacchiola2024transformer}. Future research might elucidate how the expressivity of our approach can be combined with efficient representations. In addition, in this work we are inverting the generative model by naively reversing its edges. This was sufficient for the experimental models in Section~\ref{sec:experiments}, but may fail to capture all conditional dependencies correctly \citep{stuhlmueller2013learning,webb2018faithful}. In future work, we aim to explicitly account for these dependencies to further improve the accuracy of our method. Finally, while efficient algorithms for structured semiseparable matrices exist \citep{dao2024transformers}, in this work we have not enforced a semiseparable structure nor did we make use of these algorithms. Future work could investigate how low-rank structured matrices can be incorporated into SBI methods for increased computational efficiency.
\section{Conclusion}
\label{sec:conclusion}
We present a simple but powerful approach for SBI called \textit{Causal Posterior Estimation} which effectively incorporates the conditional dependence structure of the prior and posterior programs by that achieving state-of-the-art performance in several benchmark tasks. We propose both continuous and discrete normalizing flow variants and show that in the continuous case samples can be drawn as quickly as in $\mathcal{O}(1)$ time complexity, thereby matching the sampling efficiency of discrete normalizing flows. Furthermore, we empirically demonstrate that our method achieves near-optimal acceptance rates during posterior sampling, thereby addressing a common limitation of neural SBI methods which often require generating a large number of samples to reach a desired sample size.

\section*{Acknowledgements}
This research was supported by the Swiss National Science Foundation (Grant No. $200021\_208249$).

\newpage
\printbibliography


\appendix
\section{Model details}
\label{app:model-details}

CPE is a normalizing-flow based neural posterior method. Below, we describe both CPE variants and required background.

\subsection{CPE with continuous normalizing flows}
\subsubsection{Continuous normalizing flows}

In the following, we give background on continuous normalizing flows and flow matching. Similar expositions can be found in \citet{chen2018neural,lipman2023flow,liu2023flow}.

In \textit{Flow Matching}, we aim to find a probability density path $\varrho_t$ that starts at an arbitrary base distribution $\varrho_0 = \pi_{\mathrm{base}}(\theta)$, i.e., a standard Gaussian, and ends in a distribution $\varrho_1$ that closely approximates the distribution that we are interested in sampling from $\pi(\theta)$, i.e., 
$\varrho_1(\theta) \approx \pi(\theta)$. The density path $\varrho_t$ is constructed via a vector field $u_t$ that defines the ODE
\begin{align}
    \frac{\partial \Lambda_t}{\partial t} &= u_t(\Lambda_t) \\
    \Lambda_0 &= \ttz
\end{align}
where we sample $\ttz \sim \pi_{\mathrm{base}}(\theta)$.
Assuming the probability density path $\varrho_t$ and the corresponding vector field $u_t$ that induces it are given, the vector field can be approximated numerically with a neural network $v_{t_\phi}(\theta)$ using a simple least-squared error loss
\begin{equation}
\hat{\phi} = \arg\min_{\phi} \mathbb{E}_{t \sim \mathcal{U}(0, 1), \ttt \sim \varrho_t(\theta)}\left[ 
    \lVert v_{t_\phi}(\ttt) - u_t(\ttt) \rVert^2 \right]
\label{app:marginal-objective}
\end{equation}

In practice, both the vector field $u_t$ and density paths $\varrho_t$ are, however, not given and and solutions for $u_t$ that generate $\varrho_t$ might be intractable. One convenient way is to define the density path on a \textit{per-sample} basis as a continuous mixture 
\begin{equation}
\varrho_t(\theta) = \int \varrho_t(\theta | \tto) \pi(\tto) \mathrm{d} \tto
\label{app:marginal-density-path}
\end{equation}
using conditional distributions $\varrho_t(\theta | \tto)$ and a data sample $\tto \sim \pi(\theta)$. We construct the conditional distributions such that they satisfy $\varrho_0(\theta|\tto) = \pi_{\mathrm{base}}$ and $\mathbb{E}_{\varrho_1(\theta|\tto)} = \tto$. If $\varrho_1(\theta|\tto)$ has also sufficiently small variance, the marginal $\varrho_1(\theta)$ closely approximates the data distribution
\begin{align*}
    \pi(\theta) \approx \varrho_1(\theta) = \int \varrho_1(\theta | \tto) \pi(\tto) \mathrm{d} \tto
\end{align*}
This construction is not yet particularly useful. However, as \citet{lipman2023flow} show one can similarly define a conditional vector field $u_t(\cdot | \tto)$ that generates $\varrho_t$ and which is related to the \textit{marginal} vector field $u_t$ via
\begin{equation}
u_t(\theta) = \int u_t(\theta | \tto) \frac{\varrho(\theta | \tto) \pi(\tto)}{\varrho_t(\theta)} \mathrm{d} \tto
\label{app:marginal-vector-field}
\end{equation}
This construction allows us to construct $u_t$ (and $\varrho_t$) via $u_t(\cdot | \tto)$ (and $\varrho_t(\cdot | \tto)$, respectively) which \citet{lipman2023flow} formalize as below:
\newtheorem{theorem}{Theorem}
\begin{theorem}
Given vector fields $u_t(\theta|\tto)$ that generate conditional probability paths $\varrho_t(\theta|\tto)$, for
any distribution $\pi(\tto)$, the marginal vector field $u_t$ in Equation~\eqref{app:marginal-vector-field} generates the marginal probability path $\varrho_t$ in Equation~\eqref{app:marginal-density-path}.
\end{theorem}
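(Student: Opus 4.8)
The plan is to reduce the claim to the \emph{continuity equation}: a time-dependent vector field $w_t$ generates a probability path $p_t$ exactly when the pair satisfies
\begin{equation*}
\frac{\partial}{\partial t} p_t(\theta) + \nabla \cdot \bigl( p_t(\theta)\, w_t(\theta) \bigr) = 0 ,
\end{equation*}
where $\nabla \cdot$ is the divergence in $\theta$. I would take this characterization (valid under the mild regularity and integrability conditions assumed in \citet{lipman2023flow}, which guarantee that the associated flow ODE is well posed and that no boundary terms appear) as the operational meaning of ``generates,'' and state it at the outset. First I would then rephrase the hypothesis in this language: for each fixed $\tto$ the conditional pair $\bigl( u_t(\cdot \mid \tto),\, \varrho_t(\cdot \mid \tto) \bigr)$ satisfies the continuity equation, i.e.
\begin{equation*}
\frac{\partial}{\partial t} \varrho_t(\theta \mid \tto) = -\, \nabla \cdot \bigl( \varrho_t(\theta \mid \tto)\, u_t(\theta \mid \tto) \bigr) ,
\end{equation*}
and the goal becomes verifying the same identity for the marginal pair $(u_t, \varrho_t)$ from Equations~\eqref{app:marginal-density-path} and \eqref{app:marginal-vector-field}.

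The computation is a direct substitution. Differentiating \eqref{app:marginal-density-path} under the integral over $\tto$ and inserting the conditional continuity equation gives
\begin{align*}
\frac{\partial}{\partial t} \varrho_t(\theta)
&= \int \frac{\partial}{\partial t}\varrho_t(\theta \mid \tto)\, \pi(\tto)\, \mathrm{d}\tto
= -\int \nabla \cdot \bigl( \varrho_t(\theta \mid \tto)\, u_t(\theta \mid \tto) \bigr)\, \pi(\tto)\, \mathrm{d}\tto \\
&= -\, \nabla \cdot \int \varrho_t(\theta \mid \tto)\, u_t(\theta \mid \tto)\, \pi(\tto)\, \mathrm{d}\tto .
\end{align*}
The key algebraic observation is that clearing the denominator $\varrho_t(\theta)$ in the definition \eqref{app:marginal-vector-field} of the marginal vector field yields, wherever $\varrho_t(\theta) > 0$,
\begin{equation*}
u_t(\theta)\, \varrho_t(\theta) = \int u_t(\theta \mid \tto)\, \varrho_t(\theta \mid \tto)\, \pi(\tto)\, \mathrm{d}\tto .
\end{equation*}
Substituting this into the previous display gives $\partial_t \varrho_t(\theta) = -\, \nabla \cdot \bigl( u_t(\theta)\, \varrho_t(\theta) \bigr)$, i.e. the marginal pair satisfies the continuity equation, and therefore $u_t$ generates $\varrho_t$, as claimed. (On the $\varrho_t$-null set the identity is vacuous and can be discarded.)

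I expect the main obstacle to be analytic rather than algebraic: one must justify (i) moving $\partial_t$ and the divergence inside the integral over $\tto$ (a Leibniz-rule / dominated-convergence argument requiring the conditional paths and fields to be sufficiently smooth and integrable, as \citet{lipman2023flow} assume), and (ii) the equivalence between ``$w_t$ generates $p_t$'' and the continuity equation, which needs enough regularity of the vector field for existence and uniqueness of the flow and for conservation of mass. Under the standing assumptions these are routine, so the argument is essentially the one-line substitution above.
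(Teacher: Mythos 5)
Your proposal is correct and follows essentially the same route as the paper's proof: differentiate the marginal path under the integral, apply the conditional continuity equation, exchange divergence and integration, and identify the resulting integrand with $u_t(\theta)\,\varrho_t(\theta)$ via the definition of the marginal vector field. The only difference is that you make explicit the regularity caveats and the continuity-equation characterization of ``generates,'' which the paper leaves implicit.
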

\begin{proof}
It suffices to check that $\varrho_t$ and $u_t$ satisfy the continuity equation:
\begin{align*}
    \frac{\mathrm{d}}{\mathrm{d}t} \varrho_t(\theta) 
    &= \int \left( \frac{\mathrm{d}}{\mathrm{d}t} \varrho_t(\theta|\tto) \right) \pi(\tto) \mathrm{d} \tto \\
    &= -\int \mathrm{div} \left(  u_t(\theta | \tto) \varrho_t(\theta | \tto) \right) \pi(\tto) \mathrm{d}\tto \\
    &= - \mathrm{div} \left(  \int u_t(\theta | \tto) \varrho_t(\theta | \tto) \pi(\tto) \mathrm{d}\tto \right)   \\
    &= - \mathrm{div} \left( u_t(\theta) \varrho_t(\theta) \right)
\end{align*}
\end{proof}
and where the continuity equation is defined as
\begin{align*}
    \frac{\mathrm{d}}{\mathrm{d}t} \varrho_t(\theta) = - \mathrm{div}\left( \varrho_t(\theta)v_t(\theta)  \right) 
\end{align*}

We can now define a novel tractable flow matching objective that is defined on a per-sample basis and consequently easy to compute
\begin{align*}
\hat{\phi} = \arg\min_{\phi} \mathbb{E}_{t \sim \mathcal{U}(0, 1), \tto \sim \pi(\theta), \ttt \sim \varrho_t(\theta|\tto)}\left[ 
    \lVert v_{t_\phi}(\ttt) - u_t(\ttt|\tto) \rVert^2 \right]
\end{align*}
Surprisingly, this objective has the same optima and the same gradients as the initial objective (similarly to denoising score matching \citep{song2019generative}) and can be used in lieu of original objective in Equation~\ref{app:marginal-objective}.

\citet{lipman2023flow} define a particularly useful parameterization for the conditional density path based on Gaussian transition kernels and the respective conditional vector field:
\begin{equation}
\begin{split}
\varrho_t(\theta | \tto) &= \mathcal{N}(t \tto, (1 - (1 - \sigma_\text{min})t)^2 I) \\
u_t(\theta | \tto) &= \frac{\theta_1 - (1 -  \sigma_\text{min})\theta}{1 - (1 - \sigma_\text{min})\theta}
\end{split}
\label{app:ot-map-lipman}
\end{equation}
where $\sigma_\text{min}$ is a hyperparameter (see the original publication for details) and which is the optimal discplacement map between $\varrho_0(\theta | \tto)$ and $\varrho_1(\theta | \tto)$ which has the practical properties that sampled particles always move in a straight line.

In our case, we instead choose the (also straight) transport map defined via
\begin{equation}
\begin{split}
\varrho_t(\theta | \tto, \ttz) &= t \tto - (1 - t)\ttz \\
u_t(\theta | \tto, \ttz) &= \tto - \ttz
\end{split}
\label{app:rectified-map-liu}
\end{equation}
Since $\ttt$ is in this case not (necessarily) Gaussian as in Equation~\ref{app:ot-map-lipman}, we can choose arbitrary base distributions $\pi_{\mathrm{base}}$ which will become useful in the Section~\ref{app:npe}.

\subsubsection{Neural posterior estimation} \label{app:npe}
To make flow matching useful for neural posterior estimation, we aim to find an approximation to the posterior $\pi(\theta | x)$. For that purpose, we first need to define the vector field $v_t$ on a conditional basis, i.e., we define the network to take an additional input $x$: $v: [0, 1] \times \mathbb{R}^{d_\theta} \times \mathbb{R}^{d_x}$.

Adopting the objective from \citet{wildberger2023flow}, to approximate the posterior distribution we then train the vector field by minimizing the least-squares loss
\begin{equation*}
\hat{\phi} = \arg \min_{\phi} \mathbb{E} \left[ ||v_t(\ttt, x) - u_t(\ttt|\tto, \ttz)||^2 \right]
\end{equation*}
where the expection is taken w.r.t. $t \sim \mathcal{U}(0, 1)$, $\tto \sim \pi(\theta)$, $\ttz \sim \pi_{\mathrm{base}}(\theta)$, $x \sim \pi(x | \tto)$, $\ttt \sim \varrho_t(\theta | \tto)$ and where we replace the expection w.r.t. $\pi(x) \pi(\theta|x)$ with the equivalent $\pi(\theta) \pi(x | \theta)$.

Utilizing the linear transport map in Equation~\ref{app:rectified-map-liu}, the objective becomes
\begin{equation*}
\hat{\phi} = \arg \min_{\phi} \mathbb{E} \left[ ||v_t(\ttt, x) - (\tto - \ttz)||^2 \right]
\end{equation*}

\subsubsection{Architecture}
In the continuous case, the vector field is a function of time, parameters and data $v: [0, 1] \times \mathbb{R}^{d_\theta} \times \mathbb{R}^{d_x}$. The activation functions $\mathrm{act}$ and $g$ can be arbitrary in this case, since we do not require the network to be invertible. See Figure~\ref{app:cpe-architecture} for an overview. 

\begin{figure}[h!]
    \centering
    \scalebox{0.75}{
\begin{tikzpicture}[
    block/.style={rectangle, rounded corners, minimum width=70pt, minimum height=15pt,
        draw, thick, font=\ttfamily\small},
    block-transform/.style={block, fill=violet!20, draw=violet!60, line width=0.8pt},
    add-tanh/.style={block, fill=red!20, draw=red!40, line width=0.8pt},
    linear/.style={block, fill=blue!20, draw=blue!40, line width=0.8pt},
    fourier/.style={block, fill=orange!20, draw=orange!40, line width=0.8pt},
    combo/.style={block, fill=green!20, draw=green!40, line width=0.8pt},
    compute-block/.style={rectangle, rounded corners, minimum width=100pt, minimum height=80pt, draw=gray!80, line width=0.8pt},
    param-block/.style={rectangle, rounded corners, minimum width=40pt, minimum height=15pt, draw, thick, line width=0.8pt, font=\ttfamily\small},    
    arr/.style={-{Stealth[length=5pt]}, line width=0.7pt}
]

\node[param-block] (theta) at (-6, 7.5) {$\theta$};
\node[param-block] (x) at (4, 7.5) {$x$};
\node[param-block] (t) at (0, 7.5) {$t$};

\node[fourier] (fourier) at (0, 6) {Fourier embedding};
\node[linear] (linear1) at (0, 5.25) {Linear};
\node[add-tanh] (activate1) at (0, 4.5) {Activate};
\node[linear] (linear2) at (0, 3.75) {Linear};

\node[linear] (linear3) at (4, 5.25) {Linear};
\node[add-tanh] (activate2) at (4, 4.5) {Activate};
\node[linear] (linear4) at (4, 3.75) {Linear};

\node[add-tanh] (concat) at (2, 3) {Concat and activate};

\node[linear] (linear5) at (2, 1.25) {Linear};
\node[linear] (linear6) at (2, -1) {Linear};

\node[block-transform] (block1) at (-3, 2) {Block transform};
\node[add-tanh] (add1) at (-3, 1.25) {Add and tanh};

\node[compute-block] (compute) at (-3, -1) {};
\node[block-transform] (block2) at (-3, -0.25) {Block transform};
\node[add-tanh] (add2) at (-3, -1) {Add and tanh};
\node[block-transform] (block3) at (-3, -1.75) {Block transform};

\node[combo] (combo) at (-6, -3.25) {Convex combination};
\node[anchor=north east] at ([xshift=0pt, yshift=0pt]compute.north west) {Nx};
\node[param-block] (theta_prime) at (-6, -4.75) {$\theta'$};

\draw[arr] (t) -- (fourier);
\draw[arr, -] (fourier) -- (linear1);
\draw[arr, -] (linear1) -- (activate1);
\draw[arr, -] (activate1) -- (linear2);
\draw[arr] (x) -- (linear3);

\draw[arr, -] (linear3) -- (activate2);
\draw[arr, -] (activate2) -- (linear4);

\draw[arr] (linear2) |- ($(concat.west) + (-0.25,0)$) -- (concat.west);
\draw[arr] (linear4) |- ($(concat.east) + (0.25,0)$) -- (concat.east);

\begin{pgfonlayer}{background}
    \draw[arr] (concat) -- (linear5);
   \draw[arr, dashed]  (concat)
    -- ++(0,-1)       
    -- ++(1.5,0)        
    -- ++(0,-2.25)       
    -- ++(-1.5,0)        
    -- (linear6);   
\end{pgfonlayer}

\draw[arr] (linear5) -- (add1);
\draw[arr, dashed] (linear6) -- (add2);

\draw[arr] (theta) -- (combo);
\draw[arr] (theta)  |- ($(block1.north) + (0,0.35)$) --  (block1);

\draw[arr, -] (block1) -- (add1);
\draw[arr] (add1) -- (block2);
\draw[arr, -] (block2) -- (add2);
\draw[arr, -] (add2) -- (block3);
\draw[arr, -] (block3) |- ($(combo.east) + (0,0)$) -- (combo.east);
\draw[arr] (combo) -- (theta_prime);
\end{tikzpicture}
}

\caption{Continuous-time CPE architecture. Data $x$ and time $t$ are proprocessed using an MLP and a Fourier embedding + MLP, respectively, concatenated and used as conditioning variables. We condition after a block transform before applying an activation function (Equation~\eqref{eqn:linear-projection}). While it is possible to do the conditioning after each block transform (dashed lines), here, we only do it after the first projection. In the end, to account for the prior program, we apply a convex combination (Equation~\eqref{eqn:convex-combination}).}
\label{app:cpe-architecture}
\end{figure}
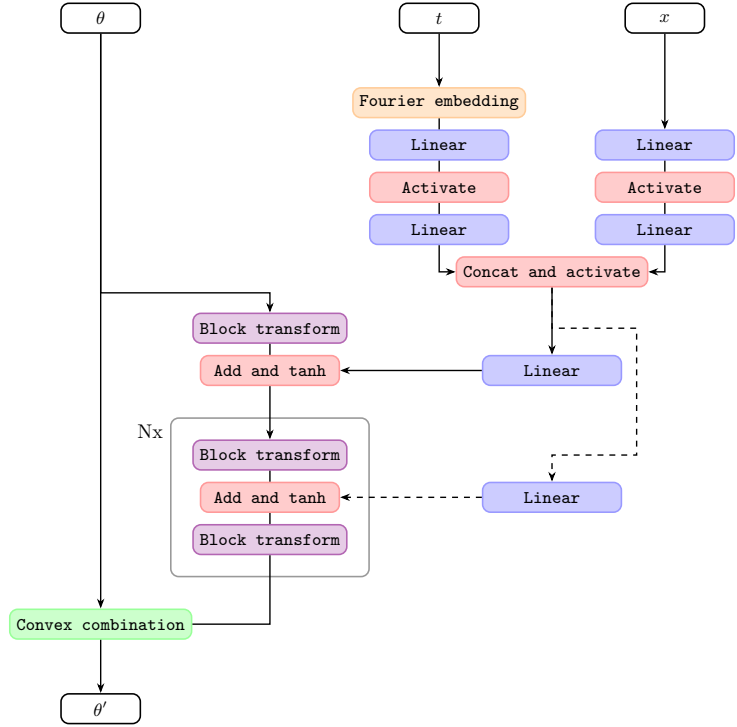

\subsection{CPE with discrete normalizing flows}
\subsubsection{Discrete normalizing flows}
As CNFs, "discrete" NFs define probability distributions in terms of a diffeomorphism $\Lambda$ and an arbitrary base distribution $\pi_{\mathrm{base}}(\theta)$:
\begin{equation}
\begin{split}
q(\theta) 
&= T_\Lambda[\pi_{\mathrm{base}}](\theta) \\
&= \pi_{\mathrm{base}}\left(\Lambda^{-1}(\theta)\right) 
 \left| \det \frac{\partial \Lambda^{-1}(\theta)}{\partial \theta} \right|
\end{split}
\end{equation}
where $T_\Lambda$ is a pushforward measure, $T$ is the pushforward operator associated with the  mapping $\Lambda: \mathbb{R}^{d_\theta} \rightarrow \mathbb{R}^{d_\theta}$, and the second term on the right-hand side is the absolute value of the Jacobian determinant.

To be able to compute the determinant effectively, we require the Jacobian to have a lower-triangular shape. In that case, such the determinant can be computed by simply multiplying the trace elements of the Jacobian. The family of autoregressive flows (which our block transform is part of due to its autoregressive factorization) fulfills this requirement naturally. For the transformation $f^{(k)}$, we set all weights on the block-diagonal to strictly positive values by transforming all values through a strictly-positive transformation $g := \exp$. This ensures monotonicity and hence invertibility. On the lower-triangular off-block diagonal this requirement is not necessary. As mentioned before, it is also required that all activations $\mathrm{act}$ are invertible which is the case for, e.g., tanh, LeakyReLUs, LeakyTanh, etc. activations. Here we employ tanh activations.

The Jacobian determinant of a sequence of block transformations
$f = f^{(K)} \circ \dots \circ f^{(1)}$ (Equation~\ref{eqn:linear-projection}) can be computed via the chain rule:
\begin{align*}
\log \left|  \det \frac{\partial f}{\partial 
    \theta} \right| 
&=  \sum_{i=1}^{d_\theta} \log \left( \frac{\partial f}{\partial \theta} \right)_{ii} \\
&=  \sum_{i=1}^{d_\theta} \log \left( 
    \frac{\partial f^{(K)}}{\partial f^{(K - 1)}} 
    \frac{\partial f^{(K - 1)}}{\partial f^{(K - 2)}} 
    \dots
    \frac{\partial f^{(1)}}{\partial \theta} 
    \right)_{ii} \\
&= \sum_{i=1}^{d_\theta} \log g(B_{ii}^{(K)}) \star  \log \frac{\partial \mathrm{act}}{h_i^{^{(K - 2)}}} \star \dots \star \log g(B_{ii}^{(1)}) 
\end{align*}
where $B_{ii}$ are matrices (see Section~\ref{sec:cond-dep}) and $h_i^{(K - 2)}$ the diagonal elements of the output of function $f^{(K - 2)}$ which depend on $\theta_i$, and where we denote with $\star$ the log-matrix multiplication which we can be implemented as
\begin{equation*}
    \log A \star \log B = \log \sum_{k = 1}^n \exp\left( A_{ik} + B_{kj} \right)
\end{equation*}

\subsubsection{Neural posterior estimation}
In comparison to the discrete case, the discrete NF training objective does not solving an ODE and hence can be done using maximum likelihood (ML). We first construct the pushforward on a conditional basis, i.e., 
\begin{equation}
\begin{split}
q(\theta, x) = \pi_{\mathrm{base}}\left(\Lambda^{-1}(\theta, x)\right) 
 \left| \det \frac{\partial \Lambda^{-1}(\theta, x)}{\partial \theta} \right|
\end{split}
\end{equation}
where $\Lambda$ is an invertible transformation given data $x$. Then, following the discrete NPE literatute (e.g., \citep{greenberg2019automatic}) we use the ML objective 
\begin{equation}
\hat{\phi} = \arg\max_{\phi} \mathbb{E}_{\theta \sim \pi(\theta), x \sim \pi(x|\theta)} \left[ q(\theta, x)  \right]
\end{equation}

With a trained normalizing flow, sampling from the posterior is a simple as drawing $\theta \sim \pi_{\mathrm{base}}(\theta)$ and then pushing the sample through the transform $\Lambda$. However, since the flow does not have an analytical inverse, we first need to find it by, e.g., bifurcation.

\subsubsection{Architecture}
In the discrete case, we drop the time argument from the vector field and define it as $v: \mathbb{R}^{d_\theta} \times \mathbb{R}^{d_x}$. In Figure~\ref{app:cpe-architecture}, this would correspond to the $t$-block being deleted. The resulting architecture is invertible given $x$.

\newpage
\section{Experimental models}
\label{app:experimental-models}

We evaluate CPE on benchmark models from the recent literature \citep{papamakarios2019sequential, greenberg2019automatic,lueckmann2021benchmarking,forbes2022summary,vargas2024transport,gloeckler2024allinone} and develop new benchmark tasks. We describe the different models below.

\subsection{Linear Gaussian}
The linear Gaussian benchmark task is defined by the following generative process:
\begin{align}
\theta & \sim \mathcal{N}_{10}(0, \sigma^2 I) \\
x \mid \theta & \sim \mathcal{N}_{10}(\theta, \sigma^2 I) 
\end{align}
where $\sigma^2 = 0.1$, $I$ is a unit matrix of appropriate dimensionality, and both $\theta \in \mathbb{R}^{10}$ and $x \in \mathbb{R}^{10}$ are ten-dimensional random variables. The linear Gaussian follows the representation in \citet{lueckmann2021benchmarking}.

\subsection{Gaussian mixture 1}
The Gaussian mixture 1 has the following generative process:
\begin{align*}
\theta & \sim \mathcal{U}_2(-10, 10) \\
 x \mid \theta & \sim \frac{1}{2} \mathcal{N}_2(\theta, I) + \frac{1}{2} \mathcal{N}_2( \theta, \sigma^2  I) 
\end{align*}
where $\sigma^2 = 0.01$, $I$ is a unit matrix, and both $\theta \in \mathbb{R}^2$ and $x \in \mathbb{R}^2$ are two-dimensional random variables. The GMM follows the representation in \citet{lueckmann2021benchmarking}.

\subsection{Gaussian mixture 2}
The Gaussian mixture 2 has been proposed in \citet{vargas2024transport}. It is a 3-component mixture that we adapted for SBI. It has the following generative process:
\begin{align*}
\theta & \sim \mathcal{N}_6\left( [-1, -1, 0, 0, 1, 1]^T, I \right) \\
 x \mid \theta & \sim 
 \frac{1}{3} \mathcal{N}_2\left(
 \theta_{1,2}, 
 \begin{bmatrix}
     0.7 & 0 \\
     0 & 0.05
 \end{bmatrix}
 \right) 
 + \frac{1}{3} \mathcal{N}_2\left( \theta_{3,4},
  \begin{bmatrix}
     0.7 & 0 \\
     0 & 0.05
 \end{bmatrix}
 \right) 
 + \frac{1}{3} \mathcal{N}_2\left( \theta_{5,6},
  \begin{bmatrix}
     0.1 & 0.95 \\
     0.95 & 1
 \end{bmatrix}
 \right) 
\end{align*}
where $I$ is a unit matrix, and $\theta \in \mathbb{R}^6$ and $x \in \mathbb{R}^2$ are six- and two-dimensional random variables, respectively.

\subsection{Hierarchical model}
We propose a novel hierarchical model for benchmarks in SBI. The model has the following generative process:
\begin{align*}
\gamma & \sim \mathcal{N}_2\left(0, I \right) \\
\beta_1, \beta_2, \beta_3 & \sim \mathcal{N}_2\left(\gamma, I \right) \\
\sigma & \sim \mathcal{N}^+\left(1 \right) \\
 x \mid \beta & \sim \mathcal{N}\left([\beta_1, \beta_2, \beta_3]^T, \sigma^2 I)
 \right) 
\end{align*}
where $\theta = (\gamma, \beta_1, \beta_2, \beta_3, \sigma)$ is a seven-dimensional random variable for which the posterior should be inferred, $I$ is a unit matrix, and and $x \in \mathbb{R}^6$.

\subsection{Hyperboloid}
The hyperboloid model \citep{forbes2022summary} is a 2-component mixture of Student's $t$-distributions of the form
\begin{align}
\theta &\sim \mathcal{U}_2(-2, 2) \\
x \mid \theta &\sim 
    \frac{1}{2} t_{10}(\nu, F(\theta;  a_1,  a_2) \mathbb{I}, \sigma^2  I) +
    \frac{1}{2} t_{10}(\nu, F(\theta;  b_1,  b_2) \mathbb{I}, \sigma^2  I)
\end{align}
which are parameterized by degrees of freedom $\nu$, mean $$F(\theta;  x_1,  x_2) =\text{abs}\left( ||\theta -  x_1 ||_2 - ||\theta -  x_2 ||_2 \right)\,,
$$and scale matrix $\sigma^2  I$. 
$\mathbb{I}$ is a ten-dimensional vector of ones. We follow \citet{forbes2022summary} and set $ a_1 = [-0.5, 0.0]^T$, $ a_2 = [0.5, 0.0]^T$, $ b_1 = [0.0, -0.5]^T$, $ b_2 = [0.0, 0.5]^T$, $\nu = 3$ and $\sigma^2 = 0.01$ for our experiments. 

\subsection{Mixture model with distractors}
Similarly to the SLCP task \citep{lueckmann2021benchmarking}, we evaluate a benchmark task that uses distractor dimensions appending the data with dimensions that are not informative of the parameters. The generative process of the model is as folows:
\begin{align*}
    \theta & \sim \mathcal{U}(-10, 10) \\
    x_1, x_2 &\sim \alpha \mathcal{N}(\theta, 1)
    + (1 - \alpha) \mathcal{N}(-\theta, \sigma^2)\\
    x_3,\cdots, x_{10} &\sim \mathcal{N}(0, 1)
\end{align*}
where we set $\alpha = \sigma = 0.3$. In addition to the two informative data dimensions $x_1, x_2$, the model adds $8$ samples from a standard Gaussian that do not carry information of the parameters. If both $x_{1,obs}$ and $x_{2,obs}$ are from the same mode (in our experiments we set $x_{1,obs} = x_{2,obs} = 5$), then the posterior for $\theta$ is bimodal with very uneven mass distribution. 

\subsection{SLCP}
The "simple likelihood complex posterior" model is a frequent benchmark model in the SBI literature \citep{papamakarios2019sequential}. It generative process is as follows:

\begin{align*}
\theta_i &\sim \text{Uniform}(-3, 3) \; \text{for} \; i=1, \dots, 5\\
\mu( {\theta}) &= (\theta_1, \theta_2), \phi_1 = \theta_3^2 , \phi_2 = \theta_4^2 \\
\Sigma( {\theta} ) &=
\begin{pmatrix}
\phi_1^2 & \text{tanh}(\theta_5) \phi_1 \phi_2 \\
\text{tanh}(\theta_5) \phi_1 \phi_2 & \phi_2^2
\end{pmatrix}\\
{x}_j | {\theta}  &\sim \mathcal{N}(x_j; \mu( {\theta}), \Sigma( {\theta})) \; \text{for} \; j=1, \dots, 4\\
{x} &= [{x}_1, \dots, {x}_4]^T
\end{align*}
Hence $\theta \in \mathbb{R}^5$ is a five-dimensional random variable, while the data $x$ has eight dimensions.

\subsection{Tree}
The tree model is a recently introduced SBI benchmark model \citep{gloeckler2024allinone}. It has the following generative process:
\begin{align*}
\theta_1 &\sim \mathcal{N}(0, 1) \\
\theta_2 &\sim \mathcal{N}(\theta_1, 1) \\
\theta_3 &\sim \mathcal{N}(\theta_1, 1) \\
x_1 &\sim \mathcal{N} \left(\sin(\theta_2)^2, 0.2^2 \right) \\
x_2 &\sim \mathcal{N} \left(\theta_2^2, 0.2^2 \right) \\
x_2 &\sim \mathcal{N} \left(0.1\theta^2_3, 0.6^2 \right) \\
x_3 &\sim \mathcal{N} \left(\cos(\theta_3)^2, 0.1^2 \right) \\
\end{align*}

\subsection{Two moons}
Two moons is a common benchmark task in the SBI literature \citep{greenberg2019automatic}. Its generative process is defined as:
\begin{align*}
\theta & \sim \mathcal{U}_2(-10, 10) \\
\alpha & \sim \mathcal{U}(-\pi / 2, \pi / 2) \\
r & \sim \mathcal{N}(0.1, 0.1^2) \\
 x \mid \theta & = 
\begin{pmatrix}
    r \cos{\alpha} + 0.25 \\ r \sin{\alpha}
\end{pmatrix}
 +
 \begin{pmatrix}
    -|\theta_1 + \theta_2| / \sqrt{2} \\ -(\theta_1 + \theta_2) / \sqrt{2}
\end{pmatrix}\,,
\end{align*}
where we treat $\alpha$ and $r$ as nuisance parameters.

\section{Experimental details}
\label{app:experimental-details}

In the following, we describe the experimental details for each benchmark task and inferential algorithm. All source code to reproduce our results can be found in the supplemental material and on GitHub (link published after review period).

\subsection{Reference posterior distributions}
We use the Python package \texttt{sbijax} \citep{dirmeier2024simulation} to draw reference, i.e., "ground-truth" posterior samples using different Markov Chain Monte Carlo samplers. For all experimental models, we sample on $10$
independent MCMC chains $10\ 000$ samples of which we discard the first $5\ 000$ as warmup. In addition, we used a thinned the MCMC chain such that only every second sample is accepted to reduce autocorrelation. Hence, the reference posterior sample consists of $50\ 00$ particles. We used a hit-and-run Slice Sampler (SS) for the models "Gaussian linear", "Gaussian Mixture 1", "Gaussian Mixture 2", "Hierarchical Model, "Hyperboloid", "Mixture Model with Distractors", "SLCP", "Tree" and "Two Moons". We monitored convergence using the typical MCMC diagnostic tools, i.e., the rank normalized split $\widehat{R}$ \citep{vehtari2021rank} and effective sample size as implemented in the Python package \citet{kumar2019arviz}.

\subsection{Comparing reference posteriors to inferred posteriors}

We compare the inferred posterior distributions to the reference posterior distributions using 
the classifier-two-sample-test statistic (C2ST, \citet{lopezpaz2017revisiting}) and the recently introduced H-Min divergence \citep{zhao2022comparing} which we, like \citet{dirmeier2023simulation}, found to be easier to tune than C2ST and other divergences. To compute the metrics, we first subsample both reference and inferred posteriors to $10\ 000$ samples, and then compute both metrics.

\subsection{Neural network architectures}

For AIO, FMPE and PSE we use implementations from the Python package \texttt{sbijax} \citep{dirmeier2024simulation}. The neural network architectures of CPE and all baselines are detailed below.

\paragraph{CPE} CPE is structured as a block neural autoregressive flow. We first embed time variables $t$ using a Fourier projection using $64$ Fourier features and then embed the features using a two-layer MLP with $64$ nodes each. We then project the data variables $x$ through a two-layer MLP with $128$ nodes each. We then concatenate the embeddings of time $t$ and data $x$ before projecting the resulting vector and the parameter values $\theta$ through the transformations $f = (f^{(1)},\dots, f^{(K)})$. In our experiments, we set $K=3$ and use a block size of $64$ meaning we set
\begin{align*}
d_\mathrm{in}^{(1)} = 1, d_\mathrm{out}^{(1)} = 64\\
d_\mathrm{in}^{(2)} = 64, d_\mathrm{out}^{(2)} = 64\\
d_\mathrm{in}^{(3)} = 64, d_\mathrm{out}^{(3)} = 64\\
d_\mathrm{in}^{(4)} = 64, d_\mathrm{out}^{(4)} = 1
\end{align*}
Each $f^{(k)}$ is then designed as described in Equation~\eqref{eqn:linear-projection}, where we use tanh activation functions $\mathrm{act} := \mathrm{tanh}$ and the identity function for $g$. We condition on data and time, by adding concatenation of the data and time embeddings after the first projection $f^{(1)}$, i.e., 
\begin{align*}
    \theta' &= f^{(1)}(\theta) \\
    \theta'' &= \theta' + \texttt{Linear}(c)
\end{align*}
where we denote with \texttt{Linear} a linear neural network layer (with bias) and with $c$ the concatenation of time and data embeddings. We denote with $\lambda_t$ the initial embedding and consecutive the projection through the blocks $f$. The result of thesse operations are projected through a convex combination as described in Equation~\eqref{eqn:convex-combination}. The 

\paragraph{AIO} As in the original publication \citep{gloeckler2024allinone}, AIO uses a transformer backbone as a score model. In order to use roughly the same number of parameters as CPE, we only use one layer and one attention head for all experiments. We used embedding dimensions of 32, 32 and 8 for values, ids and conditioning variables, respectively. We used an MLP with two layers of 64 nodes each as time embedding. We used a dropout rate of $0.0$. All activation functions are SiLUs \citep{hendrycks2016gaussian}. AIO uses the variance preserving-SDE of \citet{song2021scorebased} using a  minimum scale of $\sigma_\mathrm{min} = 0.1$ and maximum scale of $\sigma_\mathrm{max}= 10$. To faihfully compare AIO against CPE, we use directed graphs as masks for AIO.

\paragraph{FMPE} FMPE uses a two-layer MLP with 128 nodes each to embed parameter values $\theta$ and a two-layer MLP with 128 nodes each to embed data values $x$. We first project time values $t$ through random Fourier features of dimensionality $64$, before embedding the result using a two-layer MLP with 64 nodes each. We then concatenate embedded parameter, data and time values $\theta$, $x$ and $t$, respectively, and projecting them through an MLP with two hidden layers and 256 nodes per layers. FMPE uses the forward process and vector field as described in \citet{lipman2023flow} and we set $\sigma_\mathrm{min} = 0.001$. All activation functions are SiLUs \citep{hendrycks2016gaussian}.

\paragraph{PSE} We use the same neural network architecture as for FMPE (since the mathematical frameworks are the same). PSE uses the variance-preserving SDE of \citet{song2021scorebased} using a minimum scaled of $\sigma_\mathrm{min} = 0.1$ and maximum scale of $\sigma_\mathrm{max}= 10$. All activation functions are SiLUs \citep{hendrycks2016gaussian}. 

\subsection{Training}

Each neural SBI method (i.e., CPE and all baselines) are trained using an Adam optimizer \citep{kingma2015adam} with learning rate $lr = 0.0001$ and $\beta_1 = 0.9$ and $\beta_2=0.999$. We trained each method to convergence with a maximum of $2\ 000$ training epochs (whichever is fulfilled first). We used $10\%$ of the simulated data as a validation set. We use \texttt{Optax} for gradient-based optimization \citep{deepmind2020jax}.

\subsection{Sampling}

We sample from the posterior distribution of each method using a Runge-Kutta 5(4) solver. For the benchmark models, we use the implementations of \texttt{sbijax} \citep{dirmeier2024simulation}. 

For the CPE variant CPE-RK, we use the Runge-Kutta 5(4) implementation of \texttt{SciPy} \citep{virtanen2022scipy} with default parameters. For the CPE variant CPE-Euler, we use a custom Euler solver with $T=20$ steps and discretization $\mathrm{d} t = \frac{1}{20}$.

\subsection{Computational resources}
All MCMC sampling, model training and model evaluations were run on an AMD EPYCTM 7742 processor with $64$ cores and 256 GB of RAM. Runtimes of all methods were between $2-6h$. In total we conducted $450$ experiments ($5$ seeds times $4+1$ experimental models (CPE, $3$ baselines, MCMC references samples) times $9$ benchmark tasks times two data set sizes) yielding a total runtime of approximately $1800 * 64 = 115\ 200$ CPU hours. We uses the workflow manager Snakemake \citep{koster2012snakemake} to run all experiments automatically on a Slurm cluster.

\section{Additional results}
\label{app:additional-results}

\begin{figure}[h!]
    \centering
    \includegraphics[width=\textwidth]{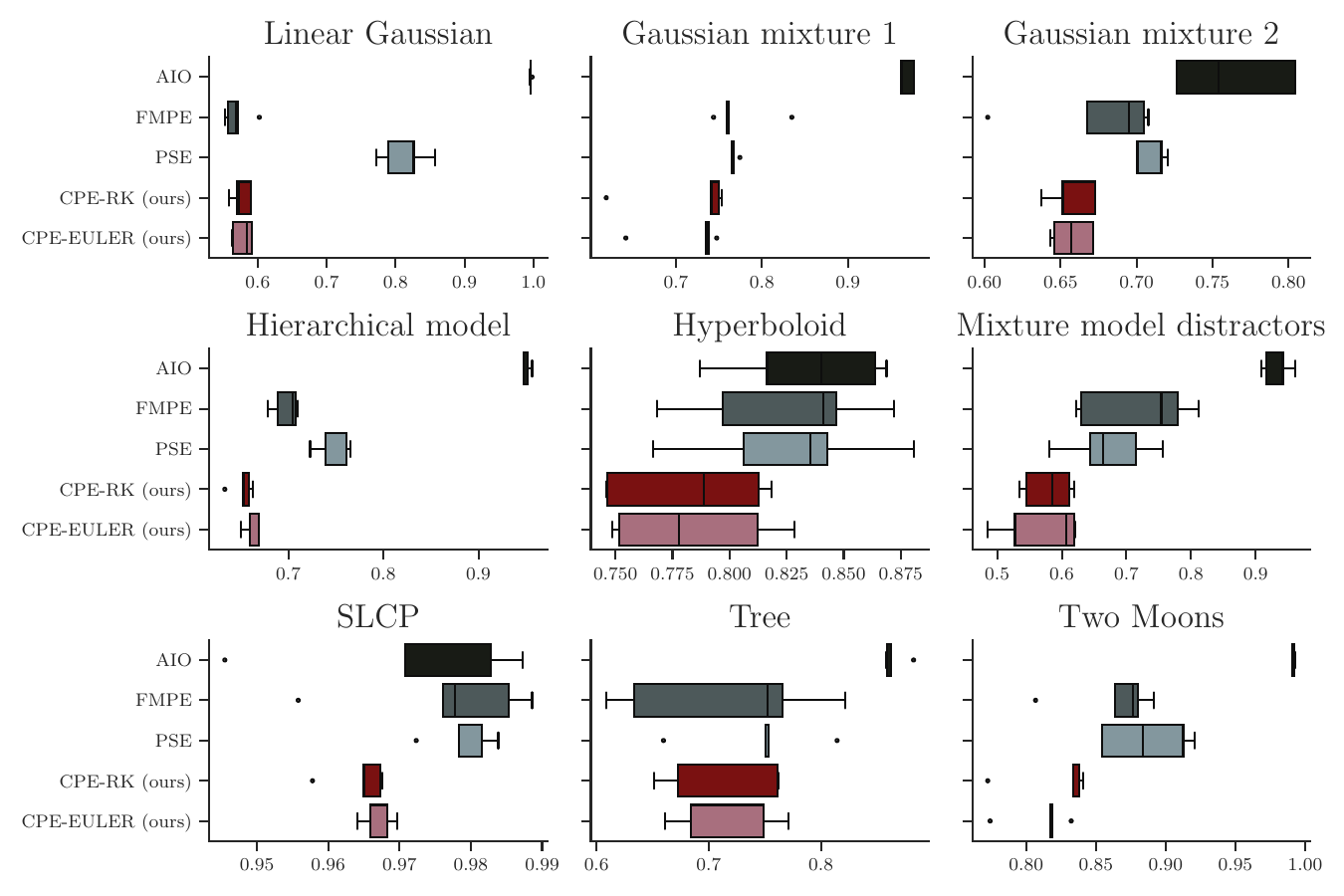}
    \caption{CPE and baseline performance using a C2ST metric (smaller values are better, 0.5 is best) when trained on a data set of size $10\ 000$. CPE-RK denotes the CPE variant that uses a Runge-Kutta 5(4) solver while CPE-Euler uses a $20$-step Euler solver.}
\end{figure}

\begin{figure}[h!]
    \centering
    \includegraphics[width=\textwidth]{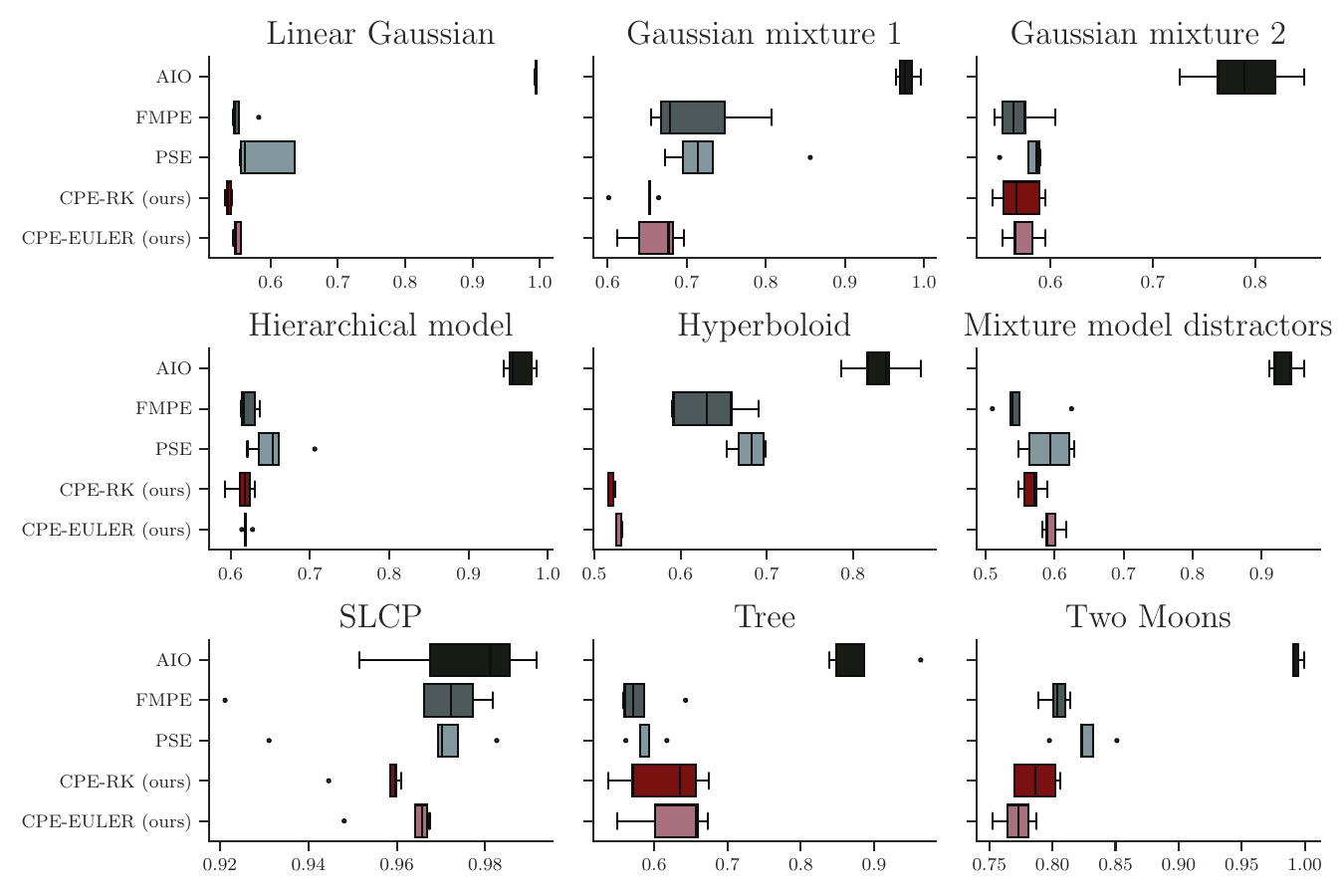}
    \caption{CPE and baseline performance using a C2ST metric (smaller values are better, 0.5 is best) when trained on a data set of size $100\ 000$. CPE-RK denotes the CPE variant that uses a Runge-Kutta 5(4) solver while CPE-Euler uses a $20$-step Euler solver.}
\end{figure}

\begin{figure}[h!]
    \centering
    \includegraphics[width=\textwidth]{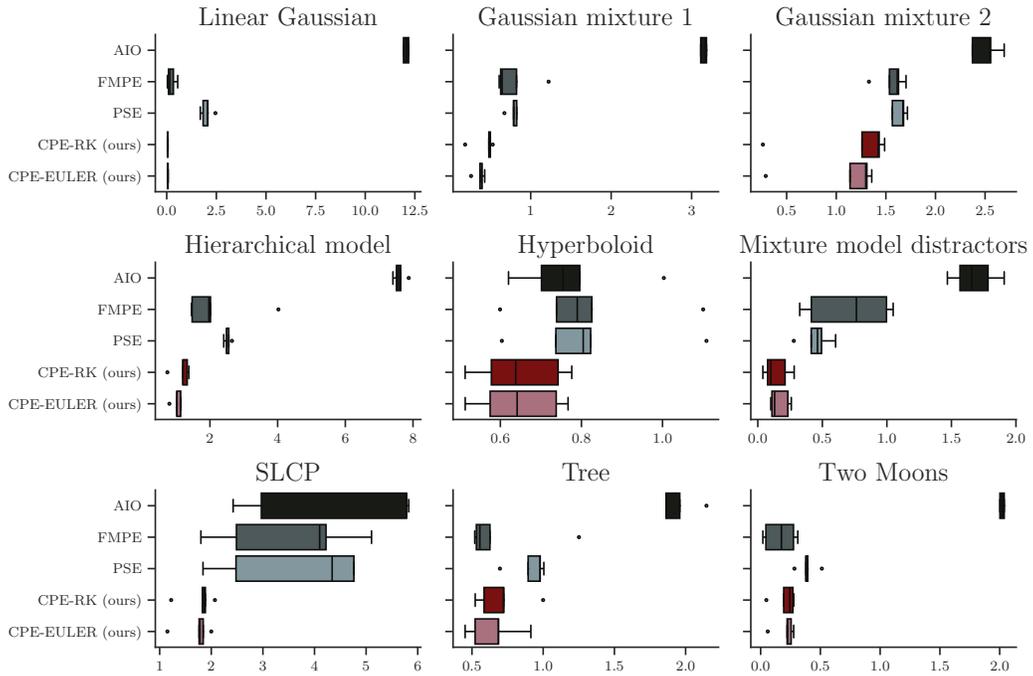}
    \caption{CPE and baseline performance using a H-min metric (smaller values are better) when trained on a data set of size $10\ 000$ (which we include for completeness again). CPE-RK denotes the CPE variant that uses a Runge-Kutta 5(4) solver while CPE-Euler uses a $20$-step Euler solver.}
\end{figure}

\begin{figure}[h!]
    \centering
    \includegraphics[width=\textwidth]{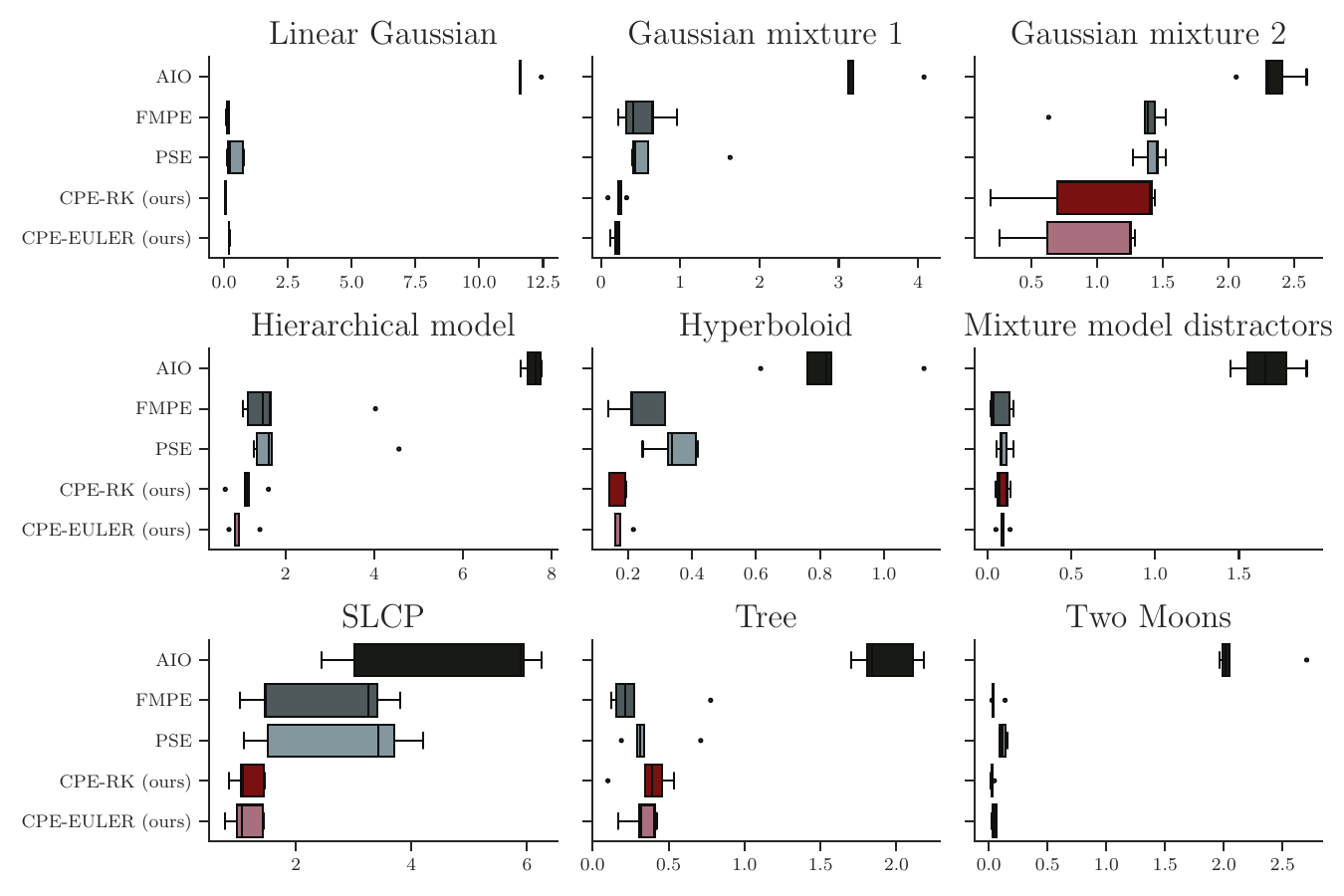}
    \caption{CPE and baseline performance using a H-min metric (smaller values are better) when trained on a data set of size $100\ 000$. CPE-RK denotes the CPE variant that uses a Runge-Kutta 5(4) solver while CPE-Euler uses a $20$-step Euler solver.}
\end{figure}

\end{document}